\newcommand{\echo}[1]{\textnormal{#1}}
\newtheorem{lemma}{Lemma}
\newtheorem{corollary}{Corollary}
\newtheorem{theorem}{Theorem}
\newcommand{\argmin}{\mathop{\arg\min}}
\newcommand{\tr}{\mathop{\mathrm{tr}}}
\providecommand{\algorithmname}{Algorithm}
\title{Normalized online learning}
\author{ {\bf St\'ephane Ross} \\  
Carnegie Mellon University\\ 
Pittsburgh, PA, USA\\ 
\texttt{stephaneross@cmu.edu}\\
\And 
{\bf Paul Mineiro}  \\ 
Microsoft \\ 
Bellevue, WA, USA \\     
\texttt{paul.mineiro@gmail.com}\\         
\And 
{\bf John Langford}   \\ 
Microsoft Research \\          
New York, NY, USA   \\           
\texttt{jcl@microsoft.com}\\
} 
\begin{document}

\maketitle

\begin{abstract}
  We introduce online learning algorithms which are
  independent of feature scales, proving regret bounds dependent on
  the ratio of scales existent in the data rather than the absolute
  scale. This has several useful effects: there is no need to
  pre-normalize data, the test-time and test-space complexity are
  reduced, and the algorithms are more robust.
\end{abstract}

\section{Introduction}

Any learning algorithm can be made invariant by initially transforming
all data to a preferred coordinate system.  In practice many
algorithms begin by applying an affine transform to features so they
are zero mean with standard deviation 1~\cite{li1998sphering}.  For
large data sets in the batch setting this preprocessing can be
expensive, and in the online setting the analogous operation is
unclear.  Furthermore preprocessing is not applicable if the inputs to
the algorithm are generated dynamically during learning, e.g., from an
on-demand primal representation of a kernel~\cite{SonFra10}, virtual
example generated to enforce an
invariant~\cite{loosli-canu-bottou-2006}, or machine learning
reduction~\cite{allwein2001reducing}.

When normalization techniques are too expensive or impossible we can
either accept a loss of performance due to the use of misnormalized
data or design learning algorithms which are inherently capable of
dealing with unnormalized data.  In the field of optimization, it is a
settled matter that algorithms should operate independent of
an individual dimensions scaling~\cite{oren1974}.  The same structure
defines natural gradients~\cite{Daw98} where in the stochastic
setting, results indicate that for the parametric case the Fisher
metric is the unique invariant metric satisfying a certain regular and
monotone property~\cite{cG98}.  Our interest here is in the online
learning setting, where this structure is rare: typically regret
bounds depend on the norm of features.

The biggest practical benefit of invariance to feature scaling is that
learning algorithms ``just work'' in a more general sense.  This is of
significant importance in online learning settings where fiddling with
hyper-parameters is often common, and this work can be regarded as an
alternative to investigations of optimal hyper-parameter
tuning~\cite{BB12,SLA12,HHL13}.  With a normalized update users do not need to
know (or remember) to pre-normalize input datasets and the need to
worry about hyper-parameter tuning is greatly reduced.  In practical
experience, it is common for those unfamiliar with machine learning to
create and attempt to use datasets without proper normalization.

Eliminating the need to normalize data also reduces computational
requirements at both training and test time.  For particularly large
datasets this can become important, since the computational cost in
time and RAM of doing normalization can rival the cost and time of
doing the machine learning (or even worse for naive centering of
sparse data).  Similarly, for applications which are constrained by
testing time, knocking out the need for feature normalization allows
more computational performance with the same features or better
prediction performance when using the freed computational resources to
use more features.

\subsection{Adversarial Scaling}
Adversarial analysis is fairly standard in online learning.  However,
an adversary capable of rescaling features can induce unbounded regret
in common gradient descent methods.  As
an example consider the standard regret bound for projected online
convex subgradient descent after $T$ rounds using the best learning
rate in hindsight~\cite{Z3},
\begin{displaymath}
R \leq \sqrt{T} ||w^*||_2 \max_{t \in 1:T} ||g_t||_2 .
\end{displaymath}
Here $w^*$ is the best predictor in hindsight and $\{ g_t \}$ is the
sequence of instantaneous gradients encountered by the
algorithm. Suppose $w^* = (1, 1) \in \mathbb{R}^2$ and imagine scaling
the first coordinate by a factor of $s$.  As $s \to \infty$,
$||w^*||_2$ approaches 1, but unfortunately for a linear predictor the
gradient is proportional to the input, so $\max_{t \in 1:T} ||g_t||_2$
can be made arbitrarily large.  Conversely as $s \to 0$, the gradient
sequence remains bounded but $||w^*||_2$ becomes arbitrarily large.
In both cases the regret bound can be made arbitrarily poor.  This is
a real effect rather than a mere artifact of analysis, as indicated by experiments with a synthetic two dimensional dataset in figure~\ref{fig:toy}.
\begin{figure}\label{fig:toy}
\begin{center}
\includegraphics[angle=270,scale=0.6]{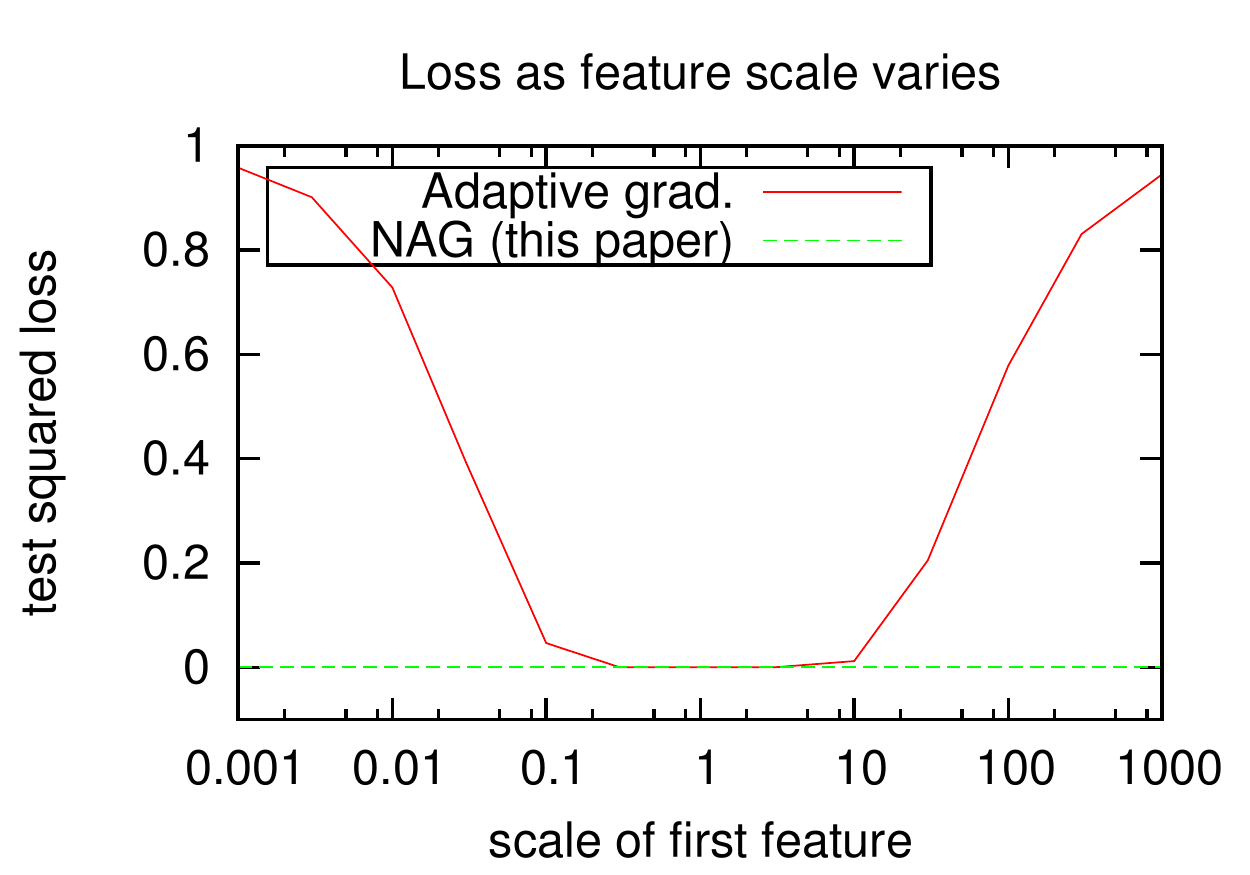}
\end{center}
\vspace{-16pt}
\caption{A comparison of performance of NAG (this paper) and adaptive gradient~\cite{MS10,DHS11} on a synthetic dataset with varying scale in the first feature.}
\end{figure}

Adaptive first-order online methods~\cite{MS10,DHS11} also have this 
vulnerability, despite adapting the geometry to the input sequence.  Consider 
a variant of the adaptive gradient update (without projection)
\begin{displaymath}
w_{t+1} = w_t - \eta\, \mathrm{diag} (\sum_{s=1}^t g_s g_s^T)^{-1/2} g_t,
\end{displaymath}
which has associated regret bound of order
\begin{displaymath}
||w^*||_2 \, d^{1/2} \sqrt{\inf_S \left\{ \sum_{t=1}^T \langle g_t, S^{-1} g_t \rangle : S \succeq 0, \mathrm{tr} (S) \leq d \right\} } .
\end{displaymath}
Again by manipulating the scaling of a single axis this can be made
arbitrarily poor.

The online Newton step~\cite{H6} algorithm has a regret bound
independent of units as we address here.  Unfortunately ONS space and
time complexity grows quadratically with the length of the input
sequence, but the existence of ONS motivates the search for
computationally viable scale invariant online learning rules.

Similarly, the second order perceptron~\cite{SO} and AROW~\cite{AROW}
partially address this problem for hinge loss.  These algorithms are
not unit-free because they have hyperparameters whose optimal value
varies with the scaling of features and again have running times that
are superlinear in the dimensionality.  More recently, diagonalized
second order perceptron and AROW have been proposed~\cite{SOROW}.
These algorithms are linear time, but their analysis is generally not
unit free since it explicitly depends on the norm of the weight
vector.  \echo{Corollary 3 is unit invariant.  A comparative analysis
  of empirical performance would be interesting to observe.}

\echo{The use of unit invariant updates have been implicitly studied
  with asymptotic analysis and empirics.  For example~\cite{SZL12}
  uses a per-parameter learning rate proportional to an estimate of
  gradient squared divided by variance and second derivative.
  Relative to this work, we prove finite regret bound guarantees for
  our algorithm.}

\subsection{Contributions}

We define normalized online learning algorithms which are invariant to
feature scaling, then show that these are interesting algorithms
theoretically and experimentally.

We define a scaling adversary for online learning analysis.  The
critical additional property of this adversary is that algorithms with
bounded regret must have updates which are invariant to feature scale.
We prove that our algorithm has a small regret against this more
stringent adversary.

We then experiment with this learning algorithm on a number of
datasets.  For pre-normalized datasets, we find that it makes little
difference as expected, while for unnormalized or improperly
normalized datasets this update rule offers large advantages over
standard online update rules.  \echo{All of our code is a part of the open
source Vowpal Wabbit project~\cite{VW12}.}

\section{Notation}

Throughout this draft, the indices $i,j$ indicate elements of a
vector, while the index $t,T$ or a particular number indicates time.
A label $y$ is associated with some features $x$, and we are concerned
with linear prediction $\sum_i w_i x_i$ resulting in some loss for
which a gradient $g$ can be computed with respect to the weights.
Other notation is introduced as it is defined.  

\section{The algorithm}

We start with the simplest version of a scale invariant online learning
algorithm.
\begin{algorithm}
\caption{\label{alg:NG}NG(learning\_rate $\eta_t$)}

\begin{enumerate}
\item Initially $w_{i}=0$, $s_{i}=0$, $N=0$

\item For each timestep $t$ observe example $(x,y)$

\begin{enumerate}
\item For each $i$, if $|x_{i}|>s_{i}$

\begin{enumerate}
\item $w_{i}\leftarrow\frac{w_{i}s_{i}^2}{|x_{i}|^2}$
\item $s_{i}\leftarrow|x_{i}|$
\end{enumerate}
\item $\hat{y}=\sum_{i}w_{i}x_{i}$
\item $N \leftarrow N + \sum_i \frac{x_i^2}{s_i^2}$
\item For each $i$,
\begin{enumerate}
\item $w_{i}\leftarrow w_{i}-\eta_t \frac{t}{N} \frac{1}{s_{i}^2}\frac{\partial L(\hat{y},y)}{\partial w_{i}}$
\end{enumerate}
\end{enumerate}
\end{enumerate}
\end{algorithm}

NG (Normalized Gradient Descent) is presented in algorithm
\ref{alg:NG}. NG adds scale invariance to online gradient descent,
making it work for any scaling of features within the dataset.

Without $s,N$, this algorithm simplifies to standard stochastic
gradient descent.

The vector element $s_{i}$ stores the magnitude of feature $i$
according to \echo{$s_{ti}=\max_{t'\in\{1...t\}}|x_{t'i}|$}. These are
updated and maintained online in steps 2.(a).ii, and used to rescale
the update on a per-feature basis in step 2.(d).i.

Using $N$ makes the learning rate (rather than feature scale) control
the average change in prediction from an update.  Here $N/t$ is the
average change in the prediction excluding $\eta$, so multiplying by
$1/(N/t) = t/N$ causes the average change in the prediction to be entirely
controlled by $\eta$.

Step 2.(a).i squashes a weight $i$ when a new scale is encountered.
Neglecting the impact of $N$, the new value is precisely equal to what
the weight's value would have been if all previous updates used the
new scale.

Many other online learning algorithms can be made scale invariant
using variants of this approach.  One attractive choice is adaptive
gradient descent~\cite{MS10,DHS11} since this also has per-feature learning rates.  The normalized version of adaptive gradient descent is given in algorithm~\ref{alg:NAG}.

In order to use this, the algorithm must maintain the sum of gradients
squared $G_{i}=\sum_{(x,y) \mbox{ observed}}\left(\frac{\partial
  L(\hat{y},y)}{\partial w_{i}}\right)^{2}$ for feature $i$ in step
2.d.i.  The interaction between $N$ and $G$ is somewhat tricky,
because a large average update (i.e. most features have a magnitude
near their scale) increases the value of $G_i$ as well as $N$ implying
the power on $N$ must be decreased to compensate.  Similarly, we
reduce the power on $s_i$ and $|x_i|$ to $1$ throughout.  The more
complex update rule is scale invariant and the dependence on $N$
introduces an automatic global rescaling of the update rule.

\begin{algorithm}
\caption{\label{alg:NAG}NAG(learning\_rate $\eta$)}

\begin{enumerate}
\item Initially $w_{i}=0$, $s_{i}=0$, $G_{i}=0$, $N=0$

\item For each timestep $t$ observe example $(x,y)$ 

\begin{enumerate}
\item For each $i$, if $|x_{i}|>s_{i}$ 

\begin{enumerate}
\item $w_{i}\leftarrow\frac{w_{i}s_{i}}{|x_{i}|}$
\item $s_{i}\leftarrow|x_{i}|$
\end{enumerate}
\item $\hat{y}=\sum_{i}w_{i}x_{i}$
\item $N \leftarrow N + \sum_i \frac{x_i^2}{s_i^2}$
\item For each $i$, 
\begin{enumerate}
\item $G_i \leftarrow G_i + \left(\frac{\partial L(\hat{y},y)}{\partial w_{i}}\right)^2$
\item $w_{i}\leftarrow w_{i}-\eta \sqrt{\frac{t}{N}} \frac{1}{s_{i} \sqrt{G_i}}\frac{\partial L(\hat{y},y)}{\partial w_{i}}$\end{enumerate}
\end{enumerate}
\end{enumerate}
\end{algorithm}

In the next sections we analyze and justify this algorithm.  We demonstrate that NAG competes well against a set of predictors $w$ with predictions ($w^\top x$) bounded by some constant over all the inputs $x_t$ seen during training. In practice, as this is potentially sensitive to outliers, we also consider a squared norm version of NAG, which we refer to as sNAG that is a straightforward modification---we simply keep the accumulator $s_i =
\sum x_i^2$ and use $\sqrt{s_i/t}$ in the update rule. That is, normalization is carried using the standard deviation (more precisely, the square root of the second moment) of each feature, rather than the max norm. With respect to our analysis below, this simple modification can be interpreted as changing slightly the set of predictors we compete against, i.e. predictors with predictions bounded by a constant only over the inputs within 1 standard deviation. Intuitively, this is more robust and appropriate in the presence of outliers. While our analysis focuses on NAG, in practice, sNAG sometimes yield improved performance.

\section{The Scaling Adversary Setting}

In common machine learning practice, the choice of units for any
particular feature is arbitrary.  For example, when estimating the
value of a house, the land associated with a house may be encoded
either in acres or square feet.  To model this effect, we propose a
scaling adversary, which is more powerful than the standard adversary
in adversarial online learning settings.

The setting for analysis is similar to adversarial online linear
learning, with the primary difference in the goal.  The setting
proceeds in the following round-by-round fashion where
\begin{enumerate}
\item Prior to all rounds, the adversary commits to a fixed
positive-definite matrix $S$.  This is not revealed to the learner.
\item On each round $t$,
\begin{enumerate}
\item The adversary chooses a vector $x_t$ such that $|| S^{1/2} x_t
||_\infty \leq 1$, where $S^{1/2}$ is the principal square root.
\item The learner makes a prediction $\hat{y}_t = w_t^\top x_t$.
\item The correct label $y_t$ is revealed and a loss $\ell(\hat{y}_t,y_t)$
is incurred.
\item The learner updates the predictor to $w_{t+1}$.
\end{enumerate}
\end{enumerate}
For example, in a regression setting, $\ell$ could be the squared loss
$\ell(\hat{y},y) = (\hat{y}-y)^2$, or in a binary classification setting,
$\ell$ could be the hinge loss $\ell(\hat{y},y) = \max(0,1-y\hat{y})$.
We consider general cases where the loss is only a function of $\hat{y}$
(i.e. no direct penalty on $w$) and convex in $\hat{y}$ (therefore convex
in $w$).

Although step 1 above is phrased in terms of an adversary, in practice
what is being modeled is ``the data set was prepared using arbitrary
units for each feature.''

Step 2 (a) above is phrased in terms of $\infty$-norm for ease of
exposition, but more generally can be considered any $p$-norm. Additionally, this step can be generalized to impose a different constraint on the inputs. For instance, instead requiring all points lie inside some $p$-norm ball, we could require that the second moment of the inputs, under some scaling matrix $S$ is 1. This is the model of the adversary for sNAG. 

\subsection{Competing against a Bounded Output Predictor}

Our goal is to compete against the set of weight vectors whose output
is bounded by some constant $C$ over the set of inputs the adversary
can choose.  Given step 2 (a) above, this is equivalent to
$\mathcal{W}^C_X = \{w \mid ||S^{-1/2}w||_1 \leq C \}$, i.e., the set
of $w$ with dual norm less than $C$.  In other words, the regret $R_T$ at
timestep $T$ is given by:
$$ R_T = \sum_{t=1}^T \ell(\hat{y}_t, y_t) - \min_{w | \forall t, w^\top x_t \leq C} \sum_{t=1}^T \ell(w^\top x_t, y_t)
$$
\echo{Here we use the fact that $\{w^\top x_t \leq C \}  = \{w \mid ||S^{-1/2}w||_1 \leq C \}$.}
In the more general case of a $p$-norm for step 2 (a), we would choose
$\mathcal{W}^C_X = \{w \mid ||S^{-1/2}w||_q \leq C \}$ for $q$ such that
$\frac{1}{p} + \frac{1}{q} = 1$. Note that the ``true'' $S$ of the adversary is an abstraction. It is unknown and only partially revealed through the data. In our analysis, we will instead be interested to bound regret against bounded output predictors for an empirical estimate of $S$, defined by the minimum volume $L_p$ ball containing all observed inputs. For $p=\infty$, $\mathcal{W}^C_X$ for the ``true'' $S$ is always a subset of the predictors allowed under this empirical $S$ (assuming both are diagonal matrices). In general, this does not necessarily hold for all $p$ norms, but the empirical $S$ always allows a larger volume of predictors than the ``true'' $S$.

\section{Analysis}

In this section, we analyze scale invariant update rules in several
ways.  The analysis is structurally similar to that used for adaptive
gradient descent~\cite{MS10,DHS11} with necessary differences to
achieve scale invariance.  We analyze the best solution in hindsight,
the best solution in a transductive setting, and the best solution in
an online setting.  These settings are each a bit more difficult than
the previous, and in each we prove regret bounds which are invariant
to feature scales.

We consider algorithms updating according to $w_{t+1} = w_t - A^{-1}_t
g_t$, where $g_t$ is the gradient of the loss at time $t$ w.r.t. $w$
at $w_t$, and $A_t$ is a sequence of $d \times d$ symmetric positive
(semi-)definite matrices that our algorithm can choose.  Both algorithms~\ref{alg:NG} and~\ref{alg:NAG} fit this general framework.  Combining the convexity of the loss
function and the definition of the update rule yields the following
result.
\begin{lemma}\label{lemma:regretbound}
\begin{displaymath}
\begin{split}
2 R_T &\leq (w_1 - w^*) A_1 (w_1 - w^*) \\
&\;\;\; + \sum_{t=1}^T (w_t - w^*)^\top (A_{t+1}-A_t) (w_t - w^*) \\
&\;\;\; + \sum_{t=1}^T g_t^\top A^{-1}_t g_t.
\end{split}
\end{displaymath}
We defer all proofs to the appendix.
\end{lemma}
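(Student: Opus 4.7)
My plan is to carry out the standard online convex optimization potential argument, with the quadratic form $(w-w^*)^\top A_t(w-w^*)$ playing the role of potential, adapted to a time-varying matrix-valued step size. First I will use convexity of $\ell(\cdot, y_t)$ as a function of $w$ (the loss is convex in $\hat y$ and $\hat y = w^\top x_t$ is linear in $w$) to linearize the per-round excess loss,
\[
\ell(w_t^\top x_t, y_t) - \ell({w^*}^\top x_t, y_t) \;\leq\; g_t^\top(w_t - w^*),
\]
with $g_t$ a (sub)gradient at $w_t$. Summing reduces the task to bounding the linearized regret $\sum_{t=1}^T g_t^\top(w_t - w^*)$.

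Next I will rewrite $g_t = A_t(w_t - w_{t+1})$ using the update rule $w_{t+1} = w_t - A_t^{-1} g_t$ and apply the polarization identity for a symmetric matrix $M$,
\[
2(a-b)^\top M (a-c) = (a-b)^\top M (a-b) + (a-c)^\top M (a-c) - (b-c)^\top M (b-c),
\]
with $a = w_t$, $b = w_{t+1}$, $c = w^*$, $M = A_t$. This yields the per-round identity
\[
2 g_t^\top(w_t - w^*) = g_t^\top A_t^{-1} g_t + (w_t - w^*)^\top A_t (w_t - w^*) - (w_{t+1}-w^*)^\top A_t (w_{t+1}-w^*),
\]
so summing over $t$ already produces the gradient term $\sum_t g_t^\top A_t^{-1} g_t$ of the claimed bound exactly.

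The only real work, and the only delicate step, is telescoping the remaining sum $\sum_{t=1}^T[(w_t-w^*)^\top A_t (w_t-w^*) - (w_{t+1}-w^*)^\top A_t(w_{t+1}-w^*)]$ when the metric $A_t$ itself changes from round to round. I will add and subtract $(w_{t+1}-w^*)^\top A_{t+1}(w_{t+1}-w^*)$ inside each summand so that the metric-consistent pieces telescope cleanly: the front boundary contributes $(w_1 - w^*)^\top A_1 (w_1 - w^*)$, and the back boundary contributes $-(w_{T+1}-w^*)^\top A_{T+1}(w_{T+1}-w^*)$, which is nonpositive and can simply be dropped. What survives are residuals of the form $(\,\cdot\, - w^*)^\top (A_{t+1}-A_t)(\,\cdot\, - w^*)$ which, after an index shift, assemble into $\sum_{t=1}^T (w_t-w^*)^\top (A_{t+1}-A_t)(w_t-w^*)$ as in the statement. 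This index bookkeeping is the only subtle point; the rest of the argument is purely algebraic and makes no structural use of the particular $A_t$ chosen by NG or NAG, which is why the lemma serves as a common starting point for both algorithms.
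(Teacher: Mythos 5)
Your proposal is correct and follows essentially the same route as the paper's proof: linearize via convexity, expand the quadratic potential $(w_{t+1}-w^*)^\top A_t(w_{t+1}-w^*)$ using the update rule to obtain the per-round identity, then telescope by inserting $(w_{t+1}-w^*)^\top A_{t+1}(w_{t+1}-w^*)$ and dropping the nonpositive terminal term. The only differences are cosmetic (polarization identity versus direct expansion, and a one-index offset in where the $(A_{t+1}-A_t)$ residuals land, a bookkeeping ambiguity already present between the paper's statement and its own proof).
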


\subsection{Best Choice of Conditioner in Hindsight}

Suppose we start from $w_1 = 0$ and before the start of the algorithm,
we would try to guess what is the best fixed matrix $A$, so that $A_t
= A$ for all $t$. In order to minimize regret, what would the best
guess be? This was initially analyzed for adaptive gradient
descent~\cite{MS10,DHS11}.  Consider the case where $A$ is a diagonal
matrix.

Using lemma \ref{lemma:regretbound}, for a fixed diagonal matrix $A$
and with $w_1 = 0$, the regret bound is:
\begin{displaymath}
R_T \leq \frac{1}{2} \sum_{i=1}^d \left( A_{ii} (w^*_i)^2 + \sum_{t=1}^T \frac{g^2_{ti}}{A_{ii}} \right).
\end{displaymath}
Taking the derivative w.r.t. $A_{ii}$, we obtain:
\begin{displaymath}
\begin{split}
&\frac{\partial}{\partial A_{ii}} \frac{1}{2} \sum_{i=1}^d \left( A_{ii} (w^*_i)^2 + \sum_{t=1}^T \frac{g^2_{ti}}{A_{ii}} \right) \\
&\;\;\; = \frac{1}{2}\left( (w^*_i)^2 - \sum_{t=1}^T \frac{g^2_{ti}}{A^2_{ii}} \right).
\end{split}
\end{displaymath}
Solving for when this is 0, we obtain
\begin{displaymath}
A^*_{ii} = \frac{\sqrt{\sum_{t=1}^T g^2_{ti}}}{\vert w^*_i \vert}.
\end{displaymath}
For this particular choice of $A$, then the regret is bounded by
\begin{displaymath}
R_T \leq \sum_{i=1}^d \left( \vert w^*_i \vert \sqrt{\sum_{t=1}^T g^2_{ti}} \right).
\end{displaymath}
We can observe that this regret is the same no matter the scaling of
the inputs. For instance if any axis $i$ is scaled by a factor $s_i$,
then $w^*_i$ would be a factor $1/s_i$ smaller, and the gradient $g_{ti}$
a factor $s_i$ larger, which would cancel out. Hence this regret can be
thought as the regret the algorithm would obtain when all features have
the same unit scale.

However, because of the dependency of $A$ on $w^*$, this does not give
us a good way to approximate this with data we have observed so
far. To remove this dependency, we can analyze for the best $A$ when
assuming a worst case for $w^*$.  This is the point at which the
analysis here differs from adaptive gradient descent where the dependence
on $w^*$ was dropped.  

\begin{lemma}\label{lemma:minimaxhindsight}
Let $S$ be the diagonal matrix with minimum determinant (volume)
s.t. $||S^{1/2} x_i||_p \leq 1$ for all $i \in 1:T$.  The solution to
\begin{displaymath}
\min_{A} \max_{w^* \in \mathcal{W}^C_X} \frac{1}{2} \sum_{i=1}^d \left( A_{ii} (w^*_i)^2 + \sum_{t=1}^T \frac{g^2_{ti}}{A_{ii}} \right)
\end{displaymath}
is given by
\begin{displaymath}
A^*_{ii} = \frac{1}{C} \sqrt{\frac{\sum_{t=1}^T g^2_{ti}}{S_{ii}}},
\end{displaymath}
and the regret bound for this particular choice of $A$ is given by
\begin{displaymath}
R_T \leq C \sum_{i=1}^d \sqrt{S_{ii} \sum_{t=1}^T g^2_{ti}}.
\end{displaymath}
\end{lemma}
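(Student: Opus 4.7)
The plan is to specialize Lemma~\ref{lemma:regretbound} to the hindsight setting $A_t\equiv A$ (diagonal) with $w_1=0$. The telescoping difference term vanishes and, since $A$ is diagonal, one obtains the per-coordinate quadratic bound
\[
R_T \;\le\; \frac{1}{2}\sum_{i=1}^d\left(A_{ii}(w^*_i)^2 + \frac{G_i}{A_{ii}}\right),
\qquad G_i := \sum_{t=1}^T g_{ti}^2,
\]
already displayed just above the lemma. All remaining work is in taking $\max$ over $w^* \in \mathcal{W}^C_X$ and then $\min$ over diagonal $A \succ 0$.

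Next I would convert the dual-norm constraint $\|S^{-1/2}w^*\|_q \le C$ into something separable. The key observation is that for any $q\in[1,\infty]$, every component of a $q$-norm is at most the norm, hence $|w^*_i|/\sqrt{S_{ii}} \le \|S^{-1/2} w^*\|_q \le C$ and therefore $|w^*_i|\le C\sqrt{S_{ii}}$ for each $i$ separately. Substituting $(w^*_i)^2\le C^2 S_{ii}$ into the quadratic yields the separable surrogate
\[
R_T \;\le\; \frac{1}{2}\sum_{i=1}^d\left(C^2 A_{ii}S_{ii} + \frac{G_i}{A_{ii}}\right),
\]
valid uniformly over $w^*\in\mathcal{W}^C_X$.

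With the problem decoupled across coordinates, minimizing over each $A_{ii}>0$ is the familiar one-variable problem $\min_x(\alpha x + \beta/x) = 2\sqrt{\alpha\beta}$, attained at $x=\sqrt{\beta/\alpha}$. Plugging in $\alpha = C^2 S_{ii}/2$ and $\beta = G_i/2$ immediately gives $A^*_{ii} = \tfrac{1}{C}\sqrt{G_i/S_{ii}}$ as claimed, and each coordinate contributes $C\sqrt{G_i S_{ii}}$, summing to the stated regret bound $R_T \le C\sum_i \sqrt{S_{ii} G_i}$.

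The only conceptually nontrivial step is the coordinate-decoupling step: $\mathcal{W}^C_X$ is a single $q$-norm constraint that genuinely couples the coordinates of $w^*$, and the truly tight inner maximum involves a $\max_i A_{ii}S_{ii}$ term rather than a sum. What makes the lemma's factored $A^*$ work --- and what is important practically --- is that relaxing the coupled constraint to the enclosing per-coordinate box $|w^*_i|\le C\sqrt{S_{ii}}$ still yields a valid upper bound on regret while producing an optimal $A^*$ expressible purely in terms of per-feature statistics $G_i$ and $S_{ii}$, which is exactly the form an online algorithm can maintain incrementally. Everything else in the argument is direct calculus on a separable convex function.
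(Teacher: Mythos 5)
Your proof is correct and follows essentially the same route as the paper's: both relax the inner maximum over $\mathcal{W}^C_X$ to the separable surrogate $\frac{1}{2}\sum_{i}\left(C^2 A_{ii}S_{ii} + G_i/A_{ii}\right)$ --- the paper via a change of variables and the trace bound $\max_{w}w^\top A w \le C^2\tr(AS)$, you via the equivalent per-coordinate observation $|w^*_i|\le C\sqrt{S_{ii}}$ --- and then minimize coordinatewise. The only cosmetic differences are that you invoke the closed form $\min_{x>0}(\alpha x + \beta/x)=2\sqrt{\alpha\beta}$ where the paper differentiates, and your remark that the tight (unrelaxed) inner maximum would involve $\max_i A_{ii}S_{ii}$ is accurate for the relevant cases $q\le 2$.
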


Again the value of the regret bound does not change if the features are
rescaled.  This is most easily appreciated by considering a specific norm.
The simplest case is for $p=\infty$ where the coefficients $S_{ii}$ can
be defined directly in terms of the range of each feature, i.e. $S_{ii}
= \frac{1}{\max_t |x_{ti}|^2}$.  Thus for $p=\infty$, we can choose
\begin{displaymath}
A^*_{ii} = \frac{1}{C} \sqrt{\sum_{t=1}^T g^2_{ti}} \max_{t\in1:T} |x_{ti}|,
\end{displaymath}
leading to a regret of
\begin{displaymath}
R_T \leq C \sum_{i=1}^d \frac{\sqrt{\sum_{t=1}^T g^2_{ti}}}{\max_{t\in1:T} |x_{ti}|}.
\end{displaymath}
The scale invariance of the regret bound is now readily apparent.
This regret can potentially be order $O(d\sqrt{T})$.

\subsection{\echo{$p=2$} case}
\label{sec:p2}
For $p=2$, computing the coefficients $S_{ii}$ is more complicated,
but if you have access to the actual coefficients $S_{ii}$, the regret
is order $O(\sqrt{dT})$. This can be seen as follows. Let $g'_t =
\left. \frac{\partial \ell}{\partial \hat{y}} \right|_{\hat{y}_t,y_t}$
the derivative of the loss at time $t$ evaluated at the predicted
$\hat{y}_t$. Then $g_{ti} = g'_t x_{ti}$ and we can see that:
\begin{displaymath}
\begin{array}{ll}
\multicolumn{2}{l}{\sum_{i=1}^d \sqrt{S_{ii} \sum_{t=1}^T g^2_{ti}}}\\
= & d \sum_{i=1}^d \frac{1}{d} \sqrt{S_{ii} \sum_{t=1}^T g^2_{ti}}\\
\leq & d \sqrt{\sum_{i=1}^d \frac{1}{d} S_{ii} \sum_{t=1}^T g^2_{ti}}\\
= & \sqrt{d} \sqrt{\sum_{i=1}^d S_{ii} \sum_{t=1}^T g^2_{ti}}\\
= & \sqrt{d} \sqrt{\sum_{t=1}^T g'^2_t \sum_{i=1}^d S_{ii} x^2_{ti}}\\
\leq & \sqrt{d} \sqrt{\sum_{t=1}^T g'^2_t}\\
\end{array}
\end{displaymath}
where the last inequality holds by assumption.  
For $p=2$, we have $R_T \leq C \sqrt{d} \sqrt{\sum_{t=1}^T g'^2_t}$.

\subsection{Adaptive Conditioner}

Lemma \ref{lemma:minimaxhindsight} does not lead to a practical
algorithm, since at time $t$, we only observed $g_{1:t}$ and $x_{1:t}$,
when performing the update for $w_{t+1}$. Hence we would not be able
to compute this optimal conditioner $A^*$. However it suggests that we
could potentially approximate this ideal choice using the information
observed so far, e.g.,
\begin{equation}
A_{t,ii} = \frac{1}{C}  \sqrt{\frac{\sum_{s=1}^t g^2_{si}}{S^{(t)}_{ii}}},
\end{equation}
where $S^{(t)}$ is the diagonal matrix with minimum determinant
s.t. $||S^{1/2} x_i||_p \leq 1$ for all $i \in 1:t$.  There are two
potential sources of additional regret in the above choice, one from
truncating the sum of gradients, and the other from estimating the
enclosing volume online.

\subsubsection{Transductive Case}
To demonstrate that truncating the sum of gradients has only a modest
impact on regret we first consider the transductive case, i.e., we assume
we have access to all inputs $x_{1:T}$ that are coming in advance. However
at time $t$, we do not know the future gradients $g_{t+1:T}$. Hence for
this setting we could consider a 2-pass algorithm. On the first pass,
compute the diagonal matrix $S$, and then on the second pass, perform
adaptive gradient descent with the following conditioner at time $t$:
\begin{equation}\label{eqAdaptCondRangeTrans}
A_{t,ii} = \frac{1}{C \eta} \sqrt{\frac{\sum_{j=1}^t g^2_{ji}}{S_{ii}}}.
\end{equation}
We would like to be able to show that if we adapt the conditioner
in this way, than our regret is not much worse than with the best
conditioner in hindsight.  To do so, we must introduce a projection
step into the algorithm.  The projection step enables us to bound the
terms in lemma \ref{lemma:regretbound} corresponding to the use of a
non-constant conditioner, which are related to the maximum distance
between an intermediate weight vector and the optimal weight vector.

Define the projection $\Pi^{A}_{S,C,q}$ as
\begin{displaymath}
\Pi^{A}_{S,C,q}(w') = \argmin_{w \in \mathbb{R}^d | ||S^{-1/2} w||_q \leq C} (w-w')^{\top} A (w-w').
\end{displaymath}
Utilizing this projection step in the update we can show the following.
\begin{theorem}\label{thm:trans}
Let $S$ be the diagonal matrix with minimum determinant s.t. $||S^{1/2}
x_i||_p \leq 1$ for all $i \in 1:T$, and let $1/q = 1 - 1/p$.
If we choose $A_t$ as in Equation \ref{eqAdaptCondRangeTrans} with
$\eta=\sqrt{2}$ and use projection $w_{t+1} = \Pi^{A_t}_{S,C,q}(w_t -
A^{-1}_t g_t)$ at each step, the regret is bounded by
\begin{displaymath}
R_T \leq 2C \sqrt{2} \sum_{i=1}^d \sqrt{S_{ii} \sum_{j=1}^T g^2_{ji}}.
\end{displaymath}
\end{theorem}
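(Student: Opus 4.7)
The plan is to apply Lemma \ref{lemma:regretbound} to the projected iterates and then bound each of its three terms by exploiting the diagonality and monotonicity of $A_t$ together with a uniform coordinatewise diameter bound supplied by the projection. Specifically, since $w^*$ lies in the feasible set $\{w : \|S^{-1/2}w\|_q \leq C\}$ and $\Pi^{A_t}_{S,C,q}$ is the $A_t$-norm projection onto that convex set, the projection is non-expansive in the $A_t$-norm; plugging this into the derivation of Lemma \ref{lemma:regretbound} (which rewrites $g_t^\top(w_t-w^*)$ via the update identity and invokes $\|w_{t+1}-w^*\|_{A_t}^2 \leq \|w_t - A_t^{-1}g_t - w^*\|_{A_t}^2$) preserves the three-term bound. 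The payoff is that every iterate $w_t$ now lies in the feasible set, so by the triangle inequality $\|S^{-1/2}(w_t-w^*)\|_q \leq 2C$ and, since $\|\cdot\|_\infty \leq \|\cdot\|_q$ for any $q \geq 1$, I obtain the uniform pointwise bound $(w_{t,i}-w_i^*)^2 \leq 4C^2 S_{ii}$.

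Next I would bound the three terms of Lemma \ref{lemma:regretbound}. The adaptive conditioner in Equation \ref{eqAdaptCondRangeTrans} is diagonal and monotonically non-decreasing in $t$, so $A_{t+1}-A_t$ is diagonal and positive semidefinite; the middle quadratic form decomposes coordinatewise, and pulling out the uniform factor $4C^2 S_{ii}$ and telescoping in $t$ bounds the sum of the first two terms of Lemma \ref{lemma:regretbound} by $4C^2 \sum_i S_{ii}\,A_{T,ii} = (4C/\eta) \sum_i \sqrt{S_{ii} \sum_t g_{ti}^2}$. For the third term, the standard one-dimensional inequality $\sum_{t=1}^T a_t/\sqrt{\sum_{j \leq t} a_j} \leq 2\sqrt{\sum_t a_t}$, applied coordinatewise with $a_t = g_{ti}^2$, gives $\sum_t g_t^\top A_t^{-1} g_t \leq 2C\eta \sum_i \sqrt{S_{ii} \sum_t g_{ti}^2}$. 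Combining yields $2R_T \leq (4C/\eta + 2C\eta)\sum_i \sqrt{S_{ii}\sum_t g_{ti}^2}$; the coefficient $4/\eta + 2\eta$ is minimized at $\eta = \sqrt{2}$ with value $4\sqrt{2}$, producing exactly the stated bound $R_T \leq 2\sqrt{2}\,C \sum_i \sqrt{S_{ii}\sum_t g_{ti}^2}$.

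The main obstacle is controlling the middle ``conditioner change'' term: without the projection there is no a priori coordinatewise diameter bound on $w_t - w^*$, and this term can grow with the total change in $A_t$ and swamp the other contributions. The projection onto the feasible set is the single technical device that supplies a uniform $O(C^2 S_{ii})$ ceiling per coordinate, letting the term telescope cleanly and cancel the aggregate curvature of $A_t$. This is also why the theorem requires the transductive setting: the projection set is defined using the $S$ computed from the full input sequence, so every step must be projected with respect to this precomputed set, a simplification that will need to be relaxed (with time-varying $S^{(t)}$) in the fully online extension.
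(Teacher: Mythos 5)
Your proposal is correct and follows essentially the same route as the paper's proof: apply Lemma \ref{lemma:regretbound} (justified by the non-expansiveness of the $A_t$-norm projection onto a set containing $w^*$), bound the gradient term with Lemma \ref{lemAdaptiveGrad} to get $2C\eta\sum_i\sqrt{S_{ii}\sum_t g_{ti}^2}$, bound the conditioner-change terms by $\sum_i A_{T,ii}\sup_t(w_{ti}-w_i^*)^2 \leq (4C/\eta)\sum_i\sqrt{S_{ii}\sum_t g_{ti}^2}$ using the projection-induced coordinatewise bound $4C^2 S_{ii}$, and optimize $4/\eta+2\eta$ at $\eta=\sqrt{2}$. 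The only cosmetic difference is that you obtain the coordinatewise diameter via $\|\cdot\|_\infty\leq\|\cdot\|_q$ applied to $S^{-1/2}(w_t-w^*)$, while the paper bounds $|w_{ti}|$ and $|w_i^*|$ by $C\sqrt{S_{ii}}$ separately; these are equivalent.
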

We note that this is only a factor $2\sqrt{2}$ worse than when using
the best fixed $A$ in hindsight, knowing all gradients in advance.

\subsubsection{Streaming Case}
In this section we focus on the case $p = \infty$.

The transductive analysis indicates that using a partial sum of
gradients does not meaningfully degrade the regret bound.  We now
investigate the impact of estimating the enclosing ellipsoid \echo{with a
diagonal matrix} online using only observed inputs,
\begin{equation}\label{eqAdaptCondRangeStream}
A_{t,ii} = \frac{1}{C \eta} \sqrt{\sum_{j=1}^t g^2_{ji}} \max_{j \in
1:t} |x_{ji}|.
\end{equation}
\echo{The diagonal approximation is necessary for computational
  efficiency in NAG.}

Intuitively the worst case is when the conditioner in equation
\ref{eqAdaptCondRangeStream} differs substantially from the transductive
conditioner of equation \ref{eqAdaptCondRangeTrans} over most of the
sequence.  This is reflected in the regret bound below which is driven
by the ratio between the first non-zero value of an input $x_{ji}$
encountered in the sequence and the maximum value it obtains over the
sequence.
\begin{theorem}\label{thm:stream}
Let $p = \infty$, $q = 1$, and let $S^{(t)}$ be the diagonal matrix with
minimum determinant s.t. $||S^{1/2} x_i||_p \leq 1$ for all $i \in 1:t$.
Let $\Delta_i = \frac{\max_{t \in 1:T}|x_{ti}|}{|x_{t^i_0i}|}$,
for $t^i_0$ the first timestep the $i^{th}$ feature is non-zero.
If we choose $A_t$ as in Equation \ref{eqAdaptCondRangeStream}, $\eta =
\sqrt{2}$ and use projection $w_{t+1} = \Pi^{A_t}_{S^{(t)},C,q}(w_t -
A^{-1}_t g_t)$ at each step, the regret is bounded by
\begin{displaymath}
R_T \leq C \sum_{i=1}^d \frac{\sqrt{\sum_{j=1}^T g^2_{ji}}}{\max_{j\in1:T} |x_{ji}|} \left( \frac{1 + 6 \Delta_i + \Delta_i^2}{2 \sqrt{2}} \right).
\end{displaymath}
\end{theorem}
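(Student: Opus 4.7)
The plan is to instantiate Lemma~\ref{lemma:regretbound} and bound each of its three terms using two invariants that the projection $\Pi^{A_t}_{S^{(t)},C,1}$ enforces. Write $m_{t,i} := \max_{j\leq t}|x_{j,i}|$ and $G_{t,i} := \sum_{j=1}^t g_{j,i}^2$, so that $A_{t,ii} = m_{t,i}\sqrt{G_{t,i}}/(C\eta)$. The projection at step $t$ guarantees $\sum_i |w_{t+1,i}|\, m_{t,i} \leq C$, hence $|w_{t,i}| \leq C/m_{t-1,i}$ for $t>t_0^i$ (and $w_{t,i}=0$ for $t\leq t_0^i$ because coordinate $i$ receives no update before then), while the definition of the comparator gives $|w^*_i| \leq C/m_{T,i}$. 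Since $m_{t-1,i} \geq |x_{t_0^i,i}| = m_{T,i}/\Delta_i$ once the feature has been observed, this upgrades to the scale-free bound $|w_{t,i}| \leq C\Delta_i/m_{T,i}$.

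First I would note that Lemma~\ref{lemma:regretbound} still applies to the projected update: projecting $(w_t - A_t^{-1}g_t)$ onto a convex set containing $w^*$ with respect to the $A_t$-norm can only shrink $\|w_{t+1}-w^*\|_{A_t}$. Combining the three bounds above I obtain the uniform coordinate-wise estimate
\begin{displaymath}
(w_{t,i}-w^*_i)^2 \;\leq\; \frac{C^2(1+\Delta_i)^2}{m_{T,i}^2}, \qquad t=1,\dots,T,
\end{displaymath}
valid both before feature $i$ appears (where the left-hand side is just $(w^*_i)^2 \leq C^2/m_{T,i}^2$) and after (via $(|w_{t,i}|+|w^*_i|)^2 \leq C^2(\Delta_i+1)^2/m_{T,i}^2$).

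Next, because each $A_{t,ii}$ is nondecreasing in $t$, applying the uniform bound diagonal-coordinate-by-coordinate and telescoping gives
\begin{displaymath}
(w_1-w^*)^\top A_1 (w_1-w^*) + \sum_{t=1}^T (w_t-w^*)^\top (A_{t+1}-A_t)(w_t-w^*) \;\leq\; \sum_{i=1}^d \frac{C^2(1+\Delta_i)^2}{m_{T,i}^2}\, A_{T+1,ii},
\end{displaymath}
and substituting $A_{T+1,ii} = m_{T,i}\sqrt{G_{T,i}}/(C\eta)$ produces $\sum_i C(1+\Delta_i)^2 \sqrt{G_{T,i}}/(\eta m_{T,i})$. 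For the gradient term, I would combine the lower bound $A_{t,ii} \geq m_{T,i}\sqrt{G_{t,i}}/(\Delta_i C\eta)$ with the standard inequality $\sum_{t=1}^T g_{t,i}^2/\sqrt{G_{t,i}} \leq 2\sqrt{G_{T,i}}$ to obtain $\sum_t g_t^\top A_t^{-1} g_t \leq \sum_i 2 C\eta \Delta_i \sqrt{G_{T,i}}/m_{T,i}$. Adding the two contributions and taking $\eta=\sqrt{2}$ collapses the per-coordinate coefficient to $((1+\Delta_i)^2 + 2\eta^2 \Delta_i)/\eta = (1+6\Delta_i+\Delta_i^2)/\sqrt{2}$, which halves to the claimed $(1+6\Delta_i+\Delta_i^2)/(2\sqrt{2})$.

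The main obstacle is the careful accounting at the boundary timesteps where $m_{t,i}$ jumps---including $t=t_0^i$, where the running max suddenly becomes nonzero and $A_{t,ii}$ increases discontinuously. A naive bound like $(w_t-w^*)^2 \leq 4C^2\Delta_i^2/m_{T,i}^2$ would overshoot (yielding a leading $\Delta_i^2$ coefficient where $(1+\Delta_i)^2$ is needed); the key is to absorb $|w_{t,i}|$ and $|w^*_i|$ together as $(C\Delta_i+C)/m_{T,i}$ rather than separately, which is what produces the particular polynomial $1+6\Delta_i+\Delta_i^2$ after combining with the single factor of $\Delta_i$ coming from the gradient term's downgrade of $m_{t,i}$ to $m_{T,i}/\Delta_i$.
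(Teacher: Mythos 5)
Your proposal is correct and follows essentially the same route as the paper's proof: the same Lemma~\ref{lemma:regretbound} decomposition, the same projection-based bounds $|w_{ti}|\leq C/|x_{t_0^i i}|$ and $|w^*_i|\leq C/\max_j|x_{ji}|$ absorbed jointly into the $(1+\Delta_i)^2$ coefficient, the same use of Lemma~\ref{lemAdaptiveGrad} with the ratio $\max_{j\leq T}|x_{ji}|/\max_{j\leq t}|x_{ji}|\leq\Delta_i$ for the gradient term, and the identical final arithmetic $((1+\Delta_i)^2+4\Delta_i)/(2\sqrt{2})=(1+6\Delta_i+\Delta_i^2)/(2\sqrt{2})$. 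The only cosmetic difference is your $A_{T+1}$ indexing in the telescoped quadratic term, which should be $A_T$ (the telescoping in the paper's proof of Lemma~\ref{lemma:regretbound} runs only to $T-1$), but since you substitute the value of $A_T$ anyway the bound is unaffected.
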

Comparing theorem \ref{thm:stream} with theorem \ref{thm:trans}
reveals the degradation in regret due to online estimation of the
enclosing ellipsoid.  Although an adversary can in general manipulate
this to cause large regret, there are nontrivial cases for which theorem
\ref{thm:stream} provides interesting protection.  For example, if the
non-zero feature values for dimension $i$ range over $[s_i, 2s_i]$ for
some unknown $s_i$, then $1 \leq \Delta_i \leq 2$ and the regret bound
is only a constant factor worse than the best choice of conditioner
in hindsight.

Because the worst case streaming scenario is when the initial sequence
has much lower scale than the entire sequence, we can improve the bound
if we weaken the ability of the adversary to choose the sequence order.
In particular, we allow the adversary to choose the sequence $\{ x_t, y_t
\}_{t=1}^T$ but then we subject the sequence to a random permutation
before processing it.  We can show that with high probability we
must observe a high percentile value of each feature after only a
few datapoints, which leads to the following corollary to theorem
\ref{thm:stream}.

\begin{corollary}\label{cor:permute}
Let $\{x_t, y_t\}_{t=1}^T$ be an exchangeable sequence with $x_t \in
\mathbb{R}^d$.  Let $p = \infty$, $q = 1$, and let $S^{(t)}$ be the
diagonal matrix with minimum determinant s.t. $||S^{1/2} x_i||_p \leq 1$
for all $i \in 1:t$.  Choose $\delta > 0$ and $\nu \in (0, 1)$.
Let $\Delta_i = \frac{\max_{t \in 1:T}|x_{ti}|}{\max_{t \in 1:\tau}
|x_{ti}|}$, where
\begin{displaymath}
\tau = \left\lceil \frac{\log(d/\delta)}{\nu} \right\rceil.
\end{displaymath}
If $R_{\max}$ is the maximum regret that can be incurred on a single
example, then choosing $\eta = \sqrt{2}$, and using projection $w_{t+1}
= \Pi^{A_t}_{S^{(t)},C,q}(w_t - A^{-1}_t g_t)$ at each step, the regret
is bounded by
\begin{displaymath}\begin{array}{rl}
R_T \leq & \left\lfloor \frac{\log(d/\delta)}{\nu} \right\rfloor R_{\max}\\
& + C \sum_{i=1}^d \frac{\sqrt{\sum_{j=1}^T g^2_{ji}}}{\max_{j\in1:T} |x_{ji}|} \left( \frac{1 + 6 \Delta_i + \Delta_i^2}{2\sqrt{2}} \right),
\end{array}
\end{displaymath}
and with probability at least $(1 - \delta)$ over sequence realizations,
for all $i \in 1:d$,
\begin{displaymath}
\Delta_i \leq \frac{\max_{t \in 1:T} |x_{ti}|}{\mathrm{Quantile\,} (\{ |x_{ti}| \}_{t=1}^T, 1-\nu)},
\end{displaymath}
where $\mathrm{Quantile} (\cdot, 1-\nu)$ is the $(1-\nu)$-th quantile
of a given sequence.
\end{corollary}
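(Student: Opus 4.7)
The plan is to prove the corollary in two stages: a deterministic regret decomposition conditioned on a favorable event, and a probabilistic analysis of that event using exchangeability. First I split the $T$ rounds at time $\tau$. The first $\tau$ rounds contribute at most $\tau R_{\max} \leq \lfloor \log(d/\delta)/\nu \rfloor R_{\max}$ to the regret, by the definition of $R_{\max}$. For the remaining rounds I reapply the proof of Theorem \ref{thm:stream} with one change: whereas that proof bounds the online conditioner's underestimation by the ratio of $\max_{t \in 1:T}|x_{ti}|$ to the first nonzero value of feature $i$, here I instead anchor the ratio at $\max_{t \in 1:\tau}|x_{ti}|$. Since the structure of Theorem \ref{thm:stream}'s argument depends only on how fast the online estimate $S^{(t)}_{ii}$ grows relative to its final value, this substitution yields the same closed-form expression with $\Delta_i$ replaced by $\max_{t \in 1:T}|x_{ti}|/\max_{t \in 1:\tau}|x_{ti}|$, matching the corollary statement.

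For the probabilistic step, I fix a feature $i$ and let $q_i$ denote its $(1-\nu)$-quantile; by definition at least $\lceil \nu T \rceil$ of the values $|x_{ti}|$ are at least $q_i$. By exchangeability, the positions of these ``large'' values form a uniformly random subset of $\{1, \ldots, T\}$, so the probability that none of them falls within the first $\tau$ positions is a standard hypergeometric tail, bounded by $(1-\nu)^\tau \leq e^{-\nu \tau}$. With $\tau = \lceil \log(d/\delta)/\nu \rceil$ this probability is at most $\delta/d$. A union bound over the $d$ features then guarantees that with probability at least $1 - \delta$, every feature $i$ satisfies $\max_{t \in 1:\tau}|x_{ti}| \geq q_i$, and hence $\Delta_i \leq \max_{t \in 1:T}|x_{ti}|/q_i$ as claimed.

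The main obstacle is the re-anchoring in the deterministic step: one needs to verify that the telescoping arguments arising from Lemma \ref{lemma:regretbound} applied to the online conditioner $A_t$ and the projection $\Pi^{A_t}_{S^{(t)},C,q}$ do not hard-code the anchor at the first nonzero observation of each coordinate, but remain valid once the first $\tau$ rounds are absorbed into the separate $\tau R_{\max}$ budget. If Theorem \ref{thm:stream}'s proof handles each coordinate's contribution independently via telescoping of $A_{t,ii}$, then this re-anchoring is essentially cost-free; otherwise a slightly more careful accounting of cross terms between the early and late phases of the sequence is required. The exchangeability and union-bound step is by contrast a routine combinatorial calculation and presents no real difficulty.
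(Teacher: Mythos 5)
Your proposal follows the paper's proof essentially verbatim: charge the initial rounds to $R_{\max}$, rerun the Theorem~\ref{thm:stream} argument from round $\tau$ onward with the per-coordinate anchor moved from the first nonzero observation to $\max_{t \in 1:\tau}|x_{ti}|$ (which works exactly as you anticipate, since both the telescoped $A_{T,ii}\sup_t(w_{ti}-w_i^*)^2$ term and the $\sum_t g_t^\top A_t^{-1} g_t$ term are handled coordinate-wise), and control $\Delta_i$ via the $(1-\nu)^\tau \leq e^{-\nu\tau} \leq \delta/d$ tail bound plus a union bound over features. The only nit is an off-by-one in your accounting: you should charge just the first $\tau-1$ rounds to $R_{\max}$, giving $(\tau-1)R_{\max} \leq \left\lfloor \log(d/\delta)/\nu \right\rfloor R_{\max}$, whereas your claimed $\tau R_{\max} \leq \left\lfloor \log(d/\delta)/\nu \right\rfloor R_{\max}$ fails whenever $\log(d/\delta)/\nu$ is not an integer.
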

The quantity $R_{\max}$ can be related to $C$, if when making predictions,
we always truncate $w^\top_t x_t$ in the interval $[-C,C]$. For
instance, for the hinge loss and logistic loss,  $R_{\max} \leq C+1$
if we truncate our predictions this way. Similarly for the squared
loss, $R_{\max} \leq 4C\max(C,\max_{t} |y_t|)$. Although in theory an
adversary can manipulate the ratio between the maximum and an extreme
quantile to induce arbitrarily bad regret (i.e. make $\frac{\max_{t
\in 1:T} |x_{ti}|}{\mathrm{Quantile\,} (\{ |x_{ti}| \}_{t=1}^T,
1-\nu)}$ arbitrarily large even for small $\nu$), in practice we can
often expect this quantity to be close to 1\footnote{For instance, if
$|x_{ti}|$ are exponentially distributed, $\Delta_i$ is roughly less than
$\log(T/\delta)/\log(1/\nu)$ with probability at least $1-\delta$, thus
choosing $\nu = T^{-\alpha}$, for $\alpha \in (0,0.5]$ makes this a small
constant of order $\alpha^{-1} \log(1/\delta)$, while keeping the first
term involving $R_{\max}$ order of $T^\alpha \leq \sqrt{T}$.}, and thus
corollary \ref{cor:permute} suggests that we may perform not much worse
than when the scale of the features are known in advance. Our experiments
demonstrate that this is the common behavior of the algorithm in practice.

%

\section{Experiments}


\begin{table}
\begin{center}
\begin{tabular}{|c|c|c|c|}
Dataset & Size & Features & Scale Range \\ \hline
Bank & 45,212 & 7 & [ 31, 102127 ] \\
Census & 199,523 & 13 & [ 2, 99999 ] \\
Covertype & 581,012 & 54 & [ 1, 7173 ] \\
CT Slice & 53,500 & 360 & [ 0.158, 1 ] \\
MSD & 463,715 & 90 & [ 60, 65735 ] \\
Shuttle & 43,500 & 9 & [ 105, 13839 ] \\
\end{tabular}
\end{center}
\caption{Datasets used for experiments.  CT Slice and MSD are regression tasks, all others are classification.  The scale of a feature is defined as the maximum empirical absolute value, and the scale range of a dataset defined as the minimum and maximum feature scales. }
\label{tab:datasets}
\end{table}

\begin{table}
\begin{tabular}{l|c|c|c|c}
\multirow{2}{*}{Dataset} & \multicolumn{2}{|c|}{NAG} &
\multicolumn{2}{|c}{AG} \\  \cline{2-5}
& $\eta^*$ & Loss & $\eta^*$ & Loss  \\ \hline
\multirow{1}{*}{Bank} & 0.55 & \bf{0.098} & $5.5 * 10^{-5}$ & 0.109 \\
\multirow{1}{*}{(Maxnorm)}& 0.55 & 0.099 & 0.061 & 0.099 \\ \hline
\multirow{1}{*}{Census} & 0.2 & \bf{0.050} & $1.2 * 10^{-6}$ & 0.054 \\
\multirow{1}{*}{(Maxnorm)} & 0.25 & 0.050 & $8.3 * 10^{-3}$ & 0.051 \\ \hline
\multirow{1}{*}{Covertype} & 1.5 & \bf{0.27} & $5.6 * 10^{-7}$ & 0.32 \\
\multirow{1}{*}{(Maxnorm)}& 1.5 & 0.27 & 0.2 & 0.27 \\ \hline
\multirow{1}{*}{CT Slice} & 2.7 &  0.0023  & 0.022 &  0.0023 \\
\multirow{1}{*}{(Maxnorm)}& 2.7 & 0.0023 & 0.022 & 0.0023 \\ \hline
\multirow{1}{*}{MSD} & 9.0 & \bf{0.0110} & $5.5 * 10^{-7}$ & 0.0130 \\
\multirow{1}{*}{(Maxnorm)}& 9.0 & 0.0110 & 6.0 & 0.0108 \\ \hline
\multirow{1}{*}{Shuttle} & 7.4 & 0.036 & $7.5 * 10^{-4}$ & 0.040 \\
\multirow{1}{*}{(Maxnorm)} & 7.4 & 0.036 & 16.4 & 0.035 \\
\end{tabular}
\caption{Comparison of NAG with Adaptive Gradient (AG) across several
  data sets. For each data set, the first line in the table contains
  the results using the original data, and the second line contains
  the results using a max-norm pre-normalized version of the original data.
  For both algorithms, $\eta^*$ is the optimal in-hindsight learning
  rate for minimizing progressive validation loss (empirically
  determined).  Significance (bolding) was determined using a relative
  entropy Chernoff bound with a $0.1$ probability of bound failure.}
\label{tab:shootout}
\end{table}

\begin{table}
\begin{tabular}{l|c|c|c|c}
\multirow{2}{*}{Dataset} & \multicolumn{2}{|c|}{sNAG} &
\multicolumn{2}{|c}{AG} \\  \cline{2-5}
& $\eta^*$ & Loss & $\eta^*$ & Loss  \\ \hline
\multirow{1}{*}{Bank} & 0.3 & \bf{0.098} & $5.5 * 10^{-5}$ & 0.109 \\
\multirow{1}{*}{(Sq norm)}& 0.3 & 0.098 & 0.033 & 0.097 \\ \hline
\multirow{1}{*}{Census} & $0.11$ & \bf{0.050} & $1.2 * 10^{-6}$ & 0.054 \\
\multirow{1}{*}{(Sq norm)} & $0.11$ & 0.050 & $1.6 * 10^{-3}$ & 0.048 \\ \hline
\multirow{1}{*}{Covertype} & 2.2 & \bf{0.28} & $5.6 * 10^{-7}$ & 0.32 \\
\multirow{1}{*}{(Sq norm)}& 2.7 & 0.28 & 0.04 & 0.28 \\ \hline
\multirow{1}{*}{CT Slice} & 2.7 &  \bf{0.0019}  & 0.022 &  0.0023 \\
\multirow{1}{*}{(Sq norm)}& 2.7 & 0.0019 & 0.0067 & 0.0019 \\ \hline
\multirow{1}{*}{MSD} & 7.4 & 0.0119 & $5.5 * 10^{-7}$ & 0.0130 \\
\multirow{1}{*}{(Sq norm)}& 7.4 & 0.0118 & 0.05 & 0.0120 \\ \hline
\multirow{1}{*}{Shuttle} & 11 & \bf{0.026} & $7.5 * 10^{-4}$ & 0.040 \\
\multirow{1}{*}{(Sq norm)} & 9 & 0.026 & 0.818 & 0.026 \\
\end{tabular}
\caption{(Online Mean Square Normalized) Comparison of NAG with
  Adaptive Gradient (AG) across several data sets. For each data set,
  the first line in the table contains the results using the original
  data, and the second line contains the results using a squared-norm
  pre-normalized version of the original data.  For both algorithms,
  $\eta^*$ is the optimal in-hindsight learning rate for minimizing
  progressive validation loss (empirically determined).  Significance
  (bolding) was determined using a relative entropy Chernoff bound
  with a $0.1$ probability of bound failure.}
\label{tab:shootout-squared-N}
\end{table}

Table \ref{tab:shootout} compares a variant of the normalized learning
rule to the adaptive gradient method~\cite{MS10,DHS11} with $p = 2$
and without projection step for both algorithms.  For each data set we
exhaustively searched the space of learning rates to optimize average
progressive validation loss.  Besides the learning rate, the learning rule
was the only parameter adjusted between the two conditions.  The loss
function used depended upon the task associated with the dataset,
which was either 0-1 loss for classification tasks or squared loss for 
regression tasks.  For regression tasks, the loss is divided by the worst 
possible squared loss, i.e., $(\max - \min)^2$.

The datasets utilized are: \emph{Bank}, the
UCI~\cite{FrankAsuncion2010} Bank Marketing Data Set~\cite{moro2011};
\emph{Census}, the UCI Census-Income KDD Data Set; \emph{Covertype},
the UCI Covertype Data Set; \emph{CT Slice}, the UCI Relative
Location of CT Slices on Axial Axis Data Set; \emph{MSD}, the
Million Song Database~\cite{Bertin-Mahieux2011}; and \emph{Shuttle},
the UCI Statlog Shuttle Data Set.  These were selected as public
datasets plausibly exhibiting varying scales or lack of normalization.
On other pre-normalized datasets which are publicly available, we observed
relatively little difference between these update rules.  To demonstrate
the effect of pre-normalization on these data sets, we constructed a 
pre-normalized version of each one by dividing every feature by its maximum empirical
absolute value.

\begin{figure}
\begin{center}
\includegraphics[angle=270,scale=0.7]{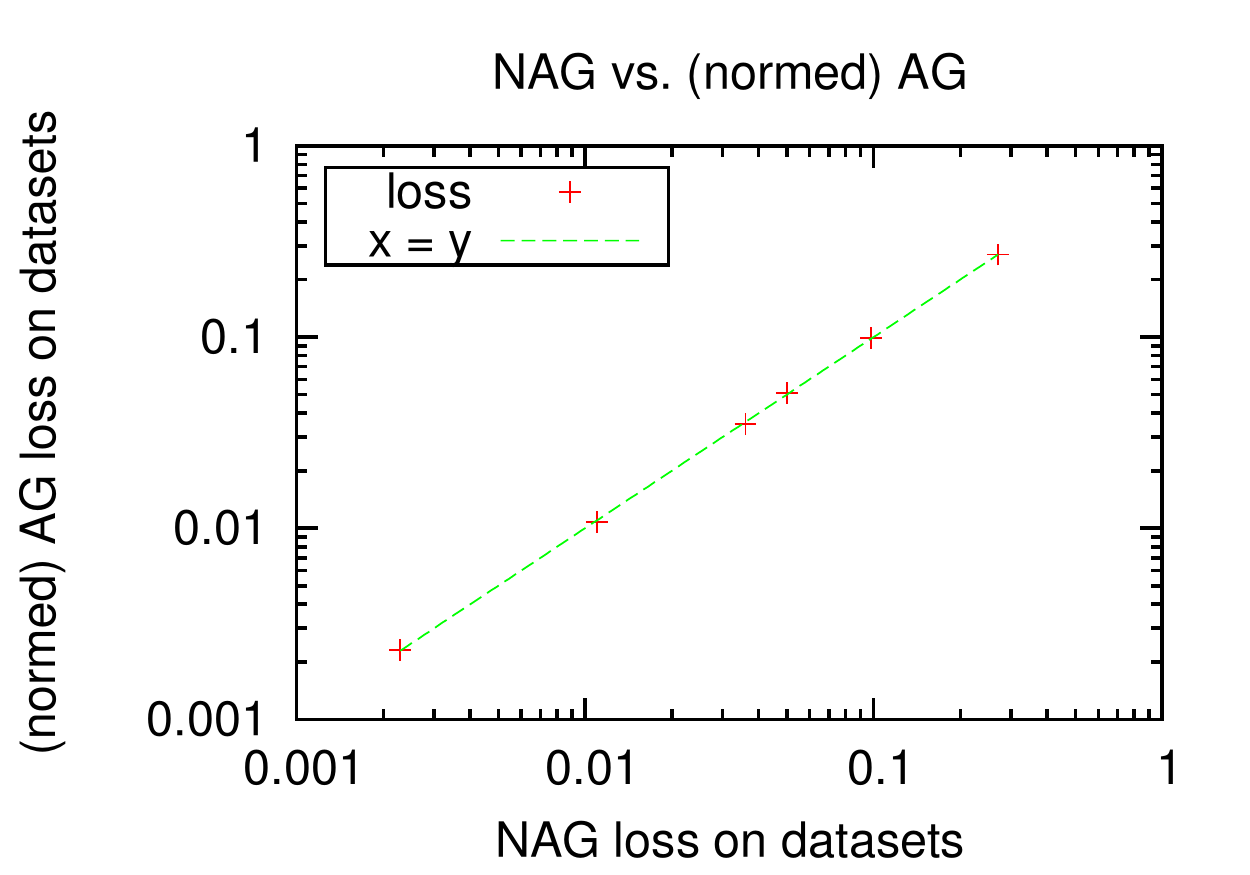}
\end{center}
\vspace{-16pt}
\caption{A comparison of performance of NAG and pre-normed AG.  The results are identical, indicating that NAG effectively obsoletes pre-normalization.}
\label{fig:perf}
\end{figure}

Some trends are evident from table \ref{tab:shootout}.  First, the
normalized learning rule (as expected) has highest impact when the
individual feature scales are highly disparate, such as data assembled
from heterogeneous sensors or measurements.  For instance, the CT
slice data set exhibits essentially no difference; although CT slice
contains physical measurements, they are histograms of raw readings
from a single device, so the differences between feature ranges is
modest (see table \ref{tab:datasets}). Conversely
the Covertype dataset shows a 5\% decrease in multiclass 0-1 loss over
the course of training.  Covertype contains some measurements in
units of meters and others in degrees, several ``hillshade index''
values that range from 0 to 255, and categorical variables.

The second trend evident from table \ref{tab:shootout} and reproduced
in figure \ref{fig:etarange} is that the optimal learning rate is both
closer to 1 in absolute terms, and varies less in relative terms,
between data sets.  This substantially eases the burden of tuning the
learning rate for high performance.  For example, a randomized
search~\cite{BB12} is much easier to conduct given that the optimal
value is extremely likely to be within $[0.01, 10]$ independent of the
data set.

\begin{figure}
\begin{center}
\includegraphics*[viewport=0 150 220 350,height=2.5in,width=3in]{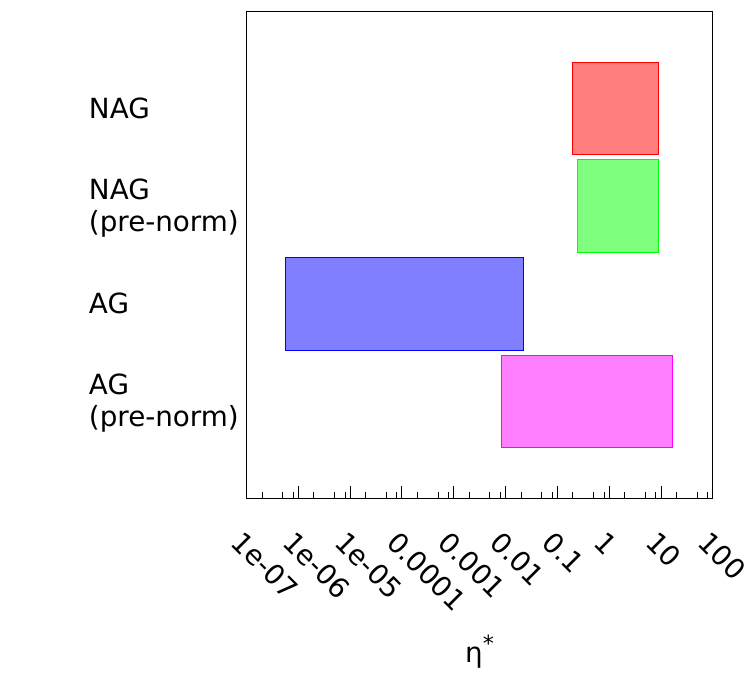}
\end{center}
\vspace{-16pt}
\caption{\echo{Each color represents the} range of optimal in-hindsight learning rates $\eta^*$ across the datasets for the different learning algorithms.  NAG exhibits a much smaller range of optimal values even when the data sets are pre-normalized, easing the problem of hyperparameter selection.}
\label{fig:etarange}
\end{figure}

The last trend evident from table \ref{tab:shootout} is the typical
indifference of the normalized learning rate to pre-normalization,
specifically the optimal learning rate and resulting progressive
validation loss.  In addition pre-normalization effectively eliminates
the difference between the normalized and Adagrad updates, 
indicating that the online algorithm achieves results similar
to the transductive algorithm for max norm.

For comprehensiveness, we also compared sNAG with a squared norm
pre-normalizer, and found the story much the same in
table~\ref{tab:shootout-squared-N}.  In particular sNAG dominated AG
on most of the datasets and performed similarly to AG when the data
was pre-normalized with a squared norm (standard deviation).  It is
also interesting to observe that sNAG performs slightly better than
NAG on a few datasets, agreeing with our intuition that it should be
more robust to outliers.  \echo{Empirically, sNAG appears somewhat more
robust than NAG at the cost of somewhat more computation.}

\section{Summary}

We evaluated performance of Normalized Adaptive Gradient (NAG) on the
most difficult unnormalized public datasets available and found that
it provided performance similar to Adaptive Gradient (AG) applied to
pre-normalized datasets while simultaneously collapsing the range
hyperparameter search required to achieve good performance.
Empirically, this makes NAG a capable and reliable learning algorithm.

We also defined a scaling adversary and proved that our algorithm is
robust and efficient against this scaling adversary unlike other
online learning algorithms.

\section*{Acknowledgements}

We would like to thank Miroslav Dudik for helpful discussions.

\bibliographystyle{apalike}
\bibliography{nonline}

\newpage

\section{Appendix (Proofs)}

\subsection{Proof of Lemma \ref{lemma:regretbound}}

\begin{proof}
\begin{displaymath}
\begin{array}{ll}
\multicolumn{2}{l}{(w_{t+1} - w^*)^\top A_t(w_{t+1} - w^*)}\\
= & (w_{t} - A_t^{-1}g_t - w^*)^\top A_t(w_{t} - A_t^{-1}g_t - w^*)\\
= & (w_{t} - w^*)^\top A_t(w_{t} - w^*) - 2 g^{\top}_t (w_t - w^*) +
g^{\top}_t A_t^{-1} g_t,\\
\end{array}
\end{displaymath}
which implies $g^\top_t (w_t - w^*) = \frac{1}{2} ( (w_{t} - w^*)^\top A_t(w_{t} - w^*) - (w_{t+1} - w^*)^\top A_t(w_{t+1} - w^*) + g^\top_t A^{-1}_t g_t ).$

Next by convexity of the loss, $\ell(w_t^{\top} x_t,y_t) - \ell(w^{*\top} x_t,y_t) \leq g_t^{\top} (w_t - w^*)$.  Therefore
\begin{displaymath}
\begin{array}{ll}
\multicolumn{2}{l}{2 R_T}\\
\leq & 2 \sum_{t=1}^T g_t^{\top} (w_{t} - w^*)\\
= & \sum_{t=1}^T \left( (w_{t} - w^*)^\top A_t(w_{t} - w^*) \right. \\
   & \left. - (w_{t+1} - w^*)^\top A_t(w_{t+1} - w^*) + g^\top_t A_t g_t \right) \\
= &  (w_{1} - w^*)^\top A_1(w_{1} - w^*) \\
    & - (w_{T+1} - w^*)^\top A_T(w_{T+1} - w^*) \\
    & + \sum_{t=1}^{T-1} (w_{t+1} - w^*)^\top (A_{t+1} - A_t)(w_{t+1} - w^*) \\
    & + \sum_{t=1}^T g^\top_t A_t g_t \\
\leq & (w_{1} - w^*)^\top A_1(w_{1} - w^*) \\
       & + \sum_{t=1}^{T-1} (w_{t+1} - w^*)^\top (A_{t+1} - A_t)(w_{t+1} - w^*) \\
       & + \sum_{t=1}^T g^\top_t A_t g_t. \\
\end{array}
\end{displaymath}
\end{proof}

\subsubsection{Proof of Lemma \ref{lemma:minimaxhindsight}}

The following result will be useful.
\begin{lemma} \label{lemAdaptiveGrad}
For any vector $x \in \mathcal{R}^n$,
\begin{displaymath}
\sum_{i=1}^n \frac{x_i^2}{\sqrt{\sum_{j=1}^i x_j^2}} \leq 2 \sqrt{\sum_{i=1}^n x_i^2}.
\end{displaymath}
\end{lemma}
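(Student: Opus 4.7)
The plan is to prove Lemma \ref{lemAdaptiveGrad} by a telescoping argument based on the elementary inequality
\[
\sqrt{a} - \sqrt{b} \;\geq\; \frac{a-b}{2\sqrt{a}} \qquad \text{for } a \geq b \geq 0,\ a > 0,
\]
which is a standard tool in the AdaGrad-style analyses this paper is building on. This inequality follows immediately from rationalizing: $\sqrt{a} - \sqrt{b} = \frac{a-b}{\sqrt{a}+\sqrt{b}}$, and replacing the denominator by its upper bound $2\sqrt{a}$.

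First I would introduce the partial sums $S_i = \sum_{j=1}^i x_j^2$ with the convention $S_0 = 0$, so that $x_i^2 = S_i - S_{i-1}$ and the summand becomes $(S_i - S_{i-1})/\sqrt{S_i}$. For those initial indices where $S_i = 0$ (equivalently $x_j = 0$ for all $j \leq i$) the corresponding summand can be taken as $0$ without affecting the bound, so I would restrict the sum to $i$ with $S_i > 0$.

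Next I would apply the elementary inequality above with $a = S_i$ and $b = S_{i-1}$, obtaining
\[
\frac{x_i^2}{\sqrt{S_i}} \;=\; \frac{S_i - S_{i-1}}{\sqrt{S_i}} \;\leq\; 2\bigl(\sqrt{S_i} - \sqrt{S_{i-1}}\bigr).
\]
Summing over $i = 1, \dots, n$ causes the right-hand side to telescope to $2\bigl(\sqrt{S_n} - \sqrt{S_0}\bigr) = 2\sqrt{S_n}$, which is exactly the desired upper bound $2\sqrt{\sum_{i=1}^n x_i^2}$.

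There is no real obstacle here; the proof is two lines once the rationalizing inequality is invoked. The only bookkeeping issue is the degenerate case of leading zero coordinates, which is handled by the convention that $0/0$ contributes $0$ to the sum (or by starting the sum at the first index where $S_i > 0$).
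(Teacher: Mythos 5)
Your proof is correct and is essentially the same argument as the paper's: the paper proves the lemma by induction using the first-order concavity bound $\sqrt{x-a} \leq \sqrt{x} - \frac{a}{2\sqrt{x}}$, which is exactly your rationalizing inequality, and the induction simply unrolls into your telescoping sum. Your explicit handling of leading zero coordinates is a minor bonus the paper glosses over.
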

\begin{proof}
Proof by induction. Clearly for $n=1$ this statement is true, as
$x_1^2/\sqrt{x_1^2} = \vert x_1 \vert \leq 2 \sqrt{x_1^2}$. Now suppose
it is true for $n=k$, we will show it must be true for $n=k+1$. We have:
\begin{displaymath}
\begin{array}{ll}
\multicolumn{2}{l}{\sum_{i=1}^{k+1} \frac{x_i^2}{\sqrt{\sum_{j=1}^i x_j^2}}}\\
= & \sum_{i=1}^{k} \frac{x_i^2}{\sqrt{\sum_{j=1}^i x_j^2}} + \frac{x_{k+1}^2}{\sqrt{\sum_{j=1}^{k+1} x_j^2}}\\
\leq & 2 \sqrt{\sum_{j=1}^k x_j^2} + \frac{x_{k+1}^2}{\sqrt{\sum_{j=1}^{k+1} x_j^2}}\\
= & 2 \sqrt{\sum_{j=1}^{k+1} x_j^2 - x_{k+1}^2} + \frac{x_{k+1}^2}{\sqrt{\sum_{j=1}^{k+1} x_j^2}}\\
\leq & 2 \sqrt{\sum_{j=1}^{k+1} x_j^2} - \frac{x_{k+1}^2}{\sqrt{\sum_{j=1}^{k+1} x_j^2}} + \frac{x_{k+1}^2}{\sqrt{\sum_{j=1}^{k+1} x_j^2}}\\
= & 2 \sqrt{\sum_{j=1}^{k+1} x_j^2},\\
\end{array}
\end{displaymath}
where the first inequality follows from our induction hypothesis,
and the last inequality uses the fact that since the square root is
concave, it is upper bounded by its first order Taylor series expansion:
i.e. $\sqrt{x-a} \leq \sqrt{x} - \frac{a}{2\sqrt{x}}$ for any $a \leq x$.
\end{proof}
The proof of lemma \ref{lemma:minimaxhindsight} now follows.
\begin{proof}
We seek to choose $A$ to minimize
\begin{displaymath}
\min_{A} \max_{w^* \in \mathcal{W}^C_X} \frac{1}{2} \sum_{i=1}^d \left( A_{ii} (w^*_i)^2 + \sum_{t=1}^T \frac{g^2_{ti}}{A_{ii}} \right).
\end{displaymath}
We first solve for the max given $A$. Consider the problem $\max_{w \in
\mathcal{W}^C_X} w^\top A w$ for the case where we define $\mathcal{X}$
using $p=2$. By doing a change of variable $y = C^{-1} S^{-1/2} w$,
this can be rewritten as:
\begin{displaymath}
\max_{y \mid y^\top y \leq 1} C^2 y^\top S^{1/2} A S^{1/2} y,
\end{displaymath}
assuming $S$, the matrix defining the ellipsoid of the input space is
full rank (invertible). To maximize this, one simply chooses $y$ to be
in the direction of the eigenvector of $S^{1/2} A S^{1/2}$ with maximum
eigenvalue:
\begin{displaymath}
\max_{w \in \mathcal{W}^C_X} w^\top A w = C^2 \lambda_{\max}(S^{1/2} A S^{1/2}),
\end{displaymath}
where $\lambda_{\max}(M)$ denotes the maximum eigenvalue of matrix
$M$. This can be upperbounded using the trace:
\begin{displaymath}
\begin{array}{ll}
\multicolumn{2}{l}{\max_{w \in \mathcal{W}^C_X} w^\top A w}\\
= & C^2 \lambda_{\max}(S^{1/2} A S^{1/2})\\
\leq & C^2 \tr(S^{1/2} A S^{1/2}) \\
= & C^2 \tr(A S).\\
\end{array}
\end{displaymath}
When $A$ and $S$ are diagonal, it can also be seen that for any $p$ norm
used to define the set of input $\mathcal{X}$, this quantity is bounded
by $C^2 \tr(A S)$. This is because using the same change of variable $z =
C^{-1} S^{-1/2} w$, the optimization can be rewritten as:
\begin{displaymath}
\max_{z \mid ||z||_q \leq 1} C^2 z^\top S^{1/2} A S^{1/2} z.
\end{displaymath}
Since $\{z \mid ||z||_q \leq 1 \} \subseteq \{z \mid ||z||_\infty \leq 1 \}$, implying:
\begin{displaymath}
\begin{split}
&\max_{z \mid ||z||_q \leq 1} C^2 z^\top S^{1/2} A S^{1/2} z 
\leq \max_{z \mid ||z||_\infty \leq 1} C^2 z^\top S^{1/2} A S^{1/2} z.
\end{split}
\end{displaymath}
Now since $A$ and $S$ are diagonal we have:
\begin{displaymath}
C^2 z^\top S^{1/2} A S^{1/2} z = C^2 \sum_{i=1}^d z_i^2 A_{ii} S_{ii},
\end{displaymath}
and since $|z_i| \leq 1$ for all $i$, then we obtain $C^2 z^\top S^{1/2}
A S^{1/2} z \leq C^2 \sum_{i=1}^d A_{ii} S_{ii} = C^2 \tr(AS)$.

If we use this upper bound for all $p$, then when $A$ and $S$ are
diagonal, the regret is bounded by:
\begin{displaymath}
R_T \leq \frac{1}{2} \sum_{i=1}^d \left( C^2 A_{ii} S_{ii} + \sum_{t=1}^T \frac{g^2_{ti}}{A_{ii}} \right).
\end{displaymath}
Taking the derivative w.r.t. $A_{ii}$ and solving for 0:
\begin{displaymath}
A^*_{ii} = \frac{1}{C} \sqrt{\frac{\sum_{t=1}^T g^2_{ti}}{S_{ii}}}.
\end{displaymath}
Substituting this choice of $A$ into the regret bound:
\begin{displaymath}
R_T \leq C \sum_{i=1}^d \sqrt{S_{ii} \sum_{t=1}^T g^2_{ti}}.
\end{displaymath}
\end{proof}

\subsection{Proof of Theorem \ref{thm:trans}}

\begin{proof}
To use lemma \ref{lemma:regretbound}, it suffices to show the projection
step satisfies $(w_{t+1} - w^*)^\top A_t(w_{t+1} - w^*) \leq (w_{t} -
A^{-1}_t g_t - w^*)^\top A_t(w_{t} - A^{-1}_t g_t - w^*)$.  The projection
step for Theorem 1 projects onto the set containing $w^*$ under norm $A_t$
and thus guarantees this directly.

The bound of lemma \ref{lemma:regretbound} has two terms in it.
Applying lemma \ref{lemAdaptiveGrad} directly to the second term tells
us that $\sum_{t=1}^T g^\top_t A^{-1}_t g_t \leq 2 \eta  C \sum_{i=1}^d
\sqrt{S_{ii} \sum_{t=1}^T g^2_{ti}}$.

For the first term, since $A_{t+1,ii} \geq A_{t,ii}$ for all $t,i$, then
\begin{displaymath}
\begin{split}
&(w_{1} - w^*)^\top A_1(w_{1} - w^*) \\
&+ \sum_{t=1}^{T-1} (w_{t+1} - w^*)^\top (A_{t+1} - A_t)(w_{t+1} - w^*
) \\
&\leq \sum_{i=1}^d A_{T,ii} \sup_t (w_{ti} - w^*_{i})^2.
\end{split}
\end{displaymath}
The projection guarantees that the maximum value $|w_{ti}|$ could take
is $C \sqrt{S_{ii}}$ for any time $t$, and similarly for $|w^*_i|$,
implying $\sup_t (w_{ti} - w^*_{i})^2 \leq 4 C^2 S_{ii}$.  Thus
\begin{displaymath}
\sum_{i=1}^d A_{T,ii} \sup_t (w_{ti} - w^*_{i})^2 \leq \frac{4 C}{\eta}
\sum_{i=1}^d \sqrt{S_{ii} \sum_{j=1}^T g^2_{ji}}.
\end{displaymath}
Adding the bounds and choosing $\eta = \sqrt{2}$ yields the desired
result.
\end{proof}

\subsection{Proof of Theorem \ref{thm:stream}}

\begin{proof}
To use lemma \ref{lemma:regretbound}, it suffices to show the projection
step satisfies $(w_{t+1} - w^*)^\top A_t(w_{t+1} - w^*) \leq (w_{t} -
A^{-1}_t g_t - w^*)^\top A_t(w_{t} - A^{-1}_t g_t - w^*)$.  For this
in turn it suffices to show that $w^*$ is in the set onto which
the projection is done.  This is guaranteed by the definition of
$S^{(t)}$, which implies the input cover is strictly increasing so that
$\mathcal{X}_t \subseteq \mathcal{X}$, and therefore the dual cover must
always contain $W_X^C$.

The bound of lemma \ref{lemma:regretbound} has two terms in it.
To bound the second term let $t_0^i$ denote the first timestamp where
$x_i \neq 0$.  Then
\begin{displaymath}
\begin{array}{ll}
\multicolumn{2}{l}{\sum_{t=1}^T g^\top_t A^{-1}_t g_t}\\
= & C \eta \sum_{i=1}^d \sum_{t=t^i_0}^T
\frac{g^2_{ti}}{\sqrt{\sum_{j=1}^t g^2_{ji}} \max_{j\in1:t} |x_{ji}|} \\
= & C \eta \sum_{i=1}^d \sum_{t=t^i_0}^T \frac{\max_{j\in1:T} |x_{ji}|}{\max_{j\in1:t} |x_{ji}|} \frac{g^2_{ti}}{\sqrt{\sum_{j=1}^t g^2_{ji}} \max_{j\in1:T} |x_{ji}|}\\
\leq & 2 C \eta \sum_{i=1}^d \frac{\max_{j\in1:T} |x_{ji}|}{|x_{t^i_0i}|}
\frac{\sqrt{\sum_{j=1}^T g^2_{ji}}}{\max_{j\in1:T} |x_{ji}|}, \\
\end{array}
\end{displaymath}
where the last inequality uses lemma \ref{lemAdaptiveGrad} and the
strictly increasing property of $\max_{j \in 1:t} |x_{ji}|$.

For the first term, since $A_{t+1,ii} \geq A_{t,ii}$ for all $t,i$, then
\begin{displaymath}
\begin{split}
&(w_{1} - w^*)^\top A_1(w_{1} - w^*) \\
&+ \sum_{t=1}^{T-1} (w_{t+1} - w^*)^\top (A_{t+1} - A_t)(w_{t+1} - w^*
) \\
&\leq \sum_{i=1}^d A_{T,ii} \sup_t (w_{ti} - w^*_{i})^2.
\end{split}
\end{displaymath}
The projection step guarantees that the maximum value $|w_{ti}|$ could
take is $C / |x_{t_0^i,i}|$, whereas the maximum absolute value of $w^*_i$
is $\frac{C}{\max_{t \in 1:T}|x_{ti}|}$.  Therefore
\begin{displaymath}
\begin{array}{ll}
\multicolumn{2}{l}{\sup_t (w_{ti} - w^*_{i})^2}\\
\leq & C^2 (\frac{1}{|x_{t^i_0i}|} + \frac{1}{\max_{t \in 1:T}|x_{ti}|})^2\\
= & \frac{C^2}{\max_{t \in 1:T}|x_{ti}|^2} (\frac{\max_{t \in 1:T}|x_{ti}|}{|x_{t^i_0i}|} + 1)^2,
\end{array}
\end{displaymath}
and we obtain that
\begin{displaymath}
\begin{split}
&\sum_{i=1}^d A_{T,ii} \sup_t (w_{ti} - w^*_{i})^2 \\
&\leq \frac{C}{\eta} \sum_{i=1}^d \frac{\sqrt{\sum_{j=1}^T g^2_{ji}}}{\max_{j\in1:T} |x_{ji}|} \left( \frac{\max_{t \in 1:T}|x_{ti}|}{|x_{t^i_0i}|} + 1 \right)^2.
\end{split}
\end{displaymath}
Adding the bounds and choosing $\eta = \sqrt{2}$ yields the desired
result.
\end{proof}

\subsection{Proof of Corollary \ref{cor:permute}}
\begin{proof}
To prove this corollary, we first prove a lemma which will be useful. Let
$x_1,x_2,\dots,x_T$ be a sequence of data points in $\mathbb{R}^d$,
let $x^{(1)}_{i},x^{(2)}_{i},\dots,x^{(T)}_{i}$ denote the values of the
$i^{th}$ feature in decreasing order (i.e. $x^{(j)}_i$ is the $j^{th}$
largest value of feature $i$), and for $p \in (0,1]$, we denote the
$1-p$ percentile of feature $i$ as $Quantile(\{x_{ti}\}_{t=1}^T,1-p) =
x^{(\lceil Tp \rceil)}_i$.

\begin{lemma}\label{lemma:percentile}
For any $\delta, p \in (0,1)$, if $x_1,x_2,\dots,x_T$ are exchangeable
then with probability at least $1-\delta$, we must have observed a value
at least greater or equal to $Quantile(\{x_{ti}\}_{t=1}^T,1-p)$ for all
features $i$ after $t = \left\lceil \frac{\log(d/\delta)}{p} \right\rceil$ examples.
\end{lemma}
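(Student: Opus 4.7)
The plan is to leverage exchangeability so that the order of observations is distributionally equivalent to a uniformly random permutation of the multiset $\{x_1,\ldots,x_T\}$, and then use a straightforward counting/union bound argument on the location of each feature's top-$p$ fraction of values.

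First, I would fix a feature index $i$ and define the ``large'' set $L_i \subseteq \{1,\ldots,T\}$ of time-indices whose feature-$i$ value equals or exceeds $\mathrm{Quantile}(\{x_{ti}\}_{t=1}^T,1-p)$. By the definition of the empirical quantile used in the statement, $|L_i| \geq \lceil Tp \rceil$. I want to upper-bound the probability that none of the positions $1,2,\ldots,t$ falls inside $L_i$, i.e.\ that every one of the first $t$ observations has a feature-$i$ value strictly below the quantile.

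Under exchangeability, the probability that the first $t$ positions miss $L_i$ entirely is at most
\begin{displaymath}
\frac{\binom{T-|L_i|}{t}}{\binom{T}{t}} \;=\; \prod_{j=0}^{t-1} \frac{T-|L_i|-j}{T-j} \;\leq\; \left(1-\tfrac{|L_i|}{T}\right)^{t} \;\leq\; (1-p)^{t} \;\leq\; e^{-pt},
\end{displaymath}
using the standard sampling-without-replacement bound $(T-|L_i|-j)/(T-j)\leq (T-|L_i|)/T$ for $j\geq 0$, then $|L_i|/T\geq p$, and finally $1-p\leq e^{-p}$.

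Next, I would union bound over the $d$ features: the probability that some feature $i$ fails to exhibit a value at or above its $(1-p)$-quantile within the first $t$ examples is at most $d\,e^{-pt}$. Setting $d\,e^{-pt}\leq \delta$ and solving for $t$ gives $t\geq \log(d/\delta)/p$, so the choice $t=\lceil \log(d/\delta)/p\rceil$ suffices. The main (minor) obstacle is purely bookkeeping: making sure exchangeability is invoked correctly so that the hypergeometric bound applies per feature despite the joint dependence across features — this is handled by noting that for \emph{each individual} feature the marginal distribution over permutations of its values is uniform, which is all the per-feature missing-probability bound needs before the union bound is applied.
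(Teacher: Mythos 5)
Your proof is correct and follows essentially the same route as the paper's: bound the per-feature probability of missing the top-$p$ fraction in the first $t$ draws by $(1-p)^t \leq e^{-pt}$, then union bound over the $d$ features and solve for $t$. The only difference is that you justify the per-feature bound rigorously via the finite-$T$ without-replacement computation $\binom{T-|L_i|}{t}/\binom{T}{t} \leq (1-p)^t$, where the paper simply asserts the $(1-p)^n$ bound with a parenthetical remark about the $T \to \infty$ limit, so your version is if anything slightly more careful.
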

\begin{proof}
Consider the case where there is only one feature. Since the sequence is
exchangeable, the probability that we do not observe a value above the
$(1-p)$ percentile of that feature after $n$ datapoints is less than or
equal to $(1-p)^n$ (equal in the limit as $T \rightarrow \infty$). Now
with $d$ features, we can use a union bound over all features, so that
the probability that do not observe a value above the $(1-p)$ percentile
for at least 1 feature out of the $d$ features after n datapoints is
bounded by $d(1-p)^n \leq d \exp(-pn)$. $d \exp(-pn) \leq \delta$ when
$n \geq \frac{\log(d/\delta)}{p}$.
\end{proof}

Now to prove the corollary, consider bounding the regret in the first
$\tau-1$ steps by $R_{\max}$, and carrying a similar proof to Lemma
\ref{lemma:regretbound} to bound the regret in the remaining steps. We
obtain that for any $\tau \in \{1,2,\dots,T\}$:
\begin{displaymath} \begin{array}{rl}
R_T \leq & (\tau-1) R_{\max} + \frac{1}{2} [(w_{\tau} - w^*)^\top A_{\tau}(w_{\tau} - w^*) \\
& + \sum_{t=\tau}^{T-1} (w_{t+1} - w^*)^\top (A_{t+1} - A_t)(w_{t+1} - w^*)  \\
& + \sum_{t=\tau}^T g^\top_t A^{-1}_t g_t ]
\end{array}
\end{displaymath}
Let $\Delta^{(\tau)}_{i} = \frac{\max_{j\in1:T} |x_{ji}|}{\max_{j
\in 1:\tau}|x_{ji}|}$. Then by following a similar proof to Theorem
\ref{thm:stream}, we have that:
\begin{displaymath}
\sum_{t=\tau}^T g^\top_t A^{-1}_t g_t \leq 2 C \eta \sum_{i=1}^d \Delta^{(\tau)}_{i} \frac{\sqrt{\sum_{j=1}^T g^2_{ji}}}{\max_{j\in1:T} |x_{ji}|}
\end{displaymath}
and that:
\begin{displaymath}
\begin{array}{l}
(w_{\tau} - w^*)^\top A_{\tau} (w_{\tau} - w^*) \\
+ \sum_{t=\tau}^{T-1} (w_{t+1} - w^*)^\top (A_{t+1} - A_t)(w_{t+1} - w^* ) \\
\leq \frac{C}{\eta} \sum_{i=1}^d \frac{\sqrt{\sum_{j=1}^T g^2_{ji}}}{\max_{j\in1:T} |x_{ji}|} \left(\Delta^{(\tau)}_{i} + 1 \right)^2
 \end{array}
\end{displaymath}
Thus we have that:
\begin{displaymath}\begin{array}{rl}
R_T \leq & (\tau-1) R_{\max} \\
& + \frac{C}{2\eta} \sum_{i=1}^d \frac{\sqrt{\sum_{j=1}^T g^2_{ji}}}{\max_{j\in1:T} |x_{ji}|} \left(\Delta^{(\tau)}_{i} + 1 \right)^2 \\
& + C \eta \sum_{i=1}^d \Delta^{(\tau)}_{i} \frac{\sqrt{\sum_{j=1}^T g^2_{ji}}}{\max_{j\in1:T} |x_{ji}|}
\end{array}
\end{displaymath}
Now using lemma \ref{lemma:percentile}, we must have that for $\tau =
\lceil \frac{\log(d/\delta)}{p} \rceil$, $\max_{j \in 1:\tau}|x_{ji}|
\geq Quantile(\{|x_{ti}|\}_{t=1}^T,1-p)$ for all $i$ with probability
at least $1-\delta$. This proves the corollary.
\end{proof}

\end{document}